\documentclass[journal]{IEEEtran}
\pdfoutput=1
\IEEEoverridecommandlockouts

\usepackage{color}
\usepackage{bm}
\usepackage{hyperref}
\usepackage{textcomp}
\usepackage{color}
\usepackage{amsthm}
\usepackage{amsfonts}
\usepackage{times,amsmath}
\usepackage{amssymb}
\usepackage{graphicx}
\usepackage{xcolor}
\usepackage{subcaption}
\usepackage{cite}
\usepackage{url}
\usepackage[capitalise]{cleveref}
\usepackage{enumitem}
\usepackage{tikz, pgfplots}
\usepackage[ruled,linesnumbered]{algorithm2e}
\usepackage{moresize} 

\input{mysymbol.sty}

\usetikzlibrary{shapes,arrows}
\pgfplotsset{compat=1.17}
\pgfplotstableset{col sep=semicolon}
\usepgfplotslibrary{groupplots}
\usepgfplotslibrary{fillbetween}

\tikzset{every mark/.append style={scale=1.6, solid}, font=\small}
\pgfplotsset{
    width=1\textwidth,
    legend style={
        font=\scriptsize ,  
        inner xsep=1pt,
        inner ysep=1pt,
        nodes={inner sep=1pt}},
    legend cell align=left,
    every axis/.append style={line width=.5pt},
 	every axis plot/.append style={line width=1.5pt},
 	every axis y label/.append style={yshift=-4pt}
}

\begin{document}

\title{Exploiting Non-Negativity in DAG Structure Learning}
\author{Samuel Rey, \textit{Member, IEEE,} Madeline Navarro, \textit{Student Member, IEEE,} and Gonzalo Mateos, \textit{Senior Member, IEEE}
\thanks{This work was supported by the NSF under Award ECCS 2231036, by the Spanish AEI Grants PID2022-136887NB-I00 and PID2023-149457OB-I00, and by the Community of Madrid (via grants CAM-URJC F1180 (CP2301), TEC-2024/COM-89, and Madrid ELLIS Unit). Part of the results in this paper appeared at the \textit{2025 IEEE International Conference on Acoustics, Speech and Signal Processing (ICASSP 2025)}~\cite{rey2025non}. \emph{(Corresponding author: Samuel Rey.)}

Samuel Rey is with the Dept. of Signal Theory and Communications,
Universidad Rey Juan Carlos, Madrid, Spain (e-mail: samuel.rey.escudero@urjc.es). Madeline Navarro is with the Dept. of Electrical and Computer Engineering, Rice University, Houston, TX 77005, USA (e-mail: mn51@rice.edu),
and Gonzalo Mateos is with the Dept. of Electrical and Computer Engineering, University of Rochester, Rochester, NY 14627, USA (e-mail: gmateosb@ece.rochester.edu).}
}


\maketitle

\begin{abstract}
This work addresses the problem of learning directed acyclic graphs (DAGs) from nodal observations generated by a linear structural equation model.
DAG learning is a central task in signal processing, machine learning, and causal inference, but it remains challenging because acyclicity is a global combinatorial property.
Continuous acyclicity constraints have led to important algorithmic advances by replacing the discrete DAG constraint with smooth equality constraints.
However, existing formulations still involve difficult non-convex optimization landscapes and may suffer from degenerate first-order optimality conditions.
Here, we restrict attention to DAGs with non-negative edge weights and exploit this additional structure to obtain a simpler characterization of acyclicity.
Building on this characterization, we formulate a regularized non-negative DAG learning problem and develop an algorithm based on the method of multipliers.
We further analyze the benign optimization landscape induced by non-negativity.
In the population regime, we show that the true DAG is the unique global minimizer of the proposed augmented-Lagrangian formulation; moreover, the landscape contains no spurious interior stationary points, and the true DAG is the only acyclic KKT point.
Numerical experiments on synthetic and real-world data show that the proposed method improves over state-of-the-art continuous DAG-learning alternatives.
\end{abstract}

\begin{IEEEkeywords}
DAG learning, network topology inference, causal discovery, graph signal processing, structural equation model
\end{IEEEkeywords}

\section{Introduction}
Directed acyclic graphs (DAGs) provide a natural language for representing directed dependencies in complex systems, where edge directionality is an essential part of the model~\cite{marques2020digraphs}.
They are central to Bayesian networks and structural equation models~\cite{koller2009probabilistic,peters2017elements,spirtes2001causation}, and have become standard tools in biology, genetics, machine learning, signal processing, and causal inference~\cite{sachs2005causal,lucas2004bayesian,zhang2013integrated,yu2019dag,rey2026directed,misiakos2024learning,yao2021survey,seifert2023causal}.
In many applications, however, the graph is not available a priori and must be inferred from nodal observations.
From the broader graph signal processing viewpoint, this task is part of graph learning, where the goal is to infer network structure from the statistical or structural properties of observed signals~\cite{mateos2019connecting}.
In the DAG setting, the problem becomes estimating a directed graph that explains the observed data while satisfying the global acyclicity requirement.
The latter requirement is the main source of difficulty.
Indeed, exact score-based DAG learning is NP-hard in general~\cite{maxwell1997efficient,chickering2004large}, since acyclicity couples all edges through a combinatorial constraint.

A large body of work has addressed this difficulty by designing score functions, greedy search strategies, and maximum-likelihood estimators with favorable statistical properties under suitable identifiability assumptions~\cite{loh2014high,peters2017elements}.
These results clarify when the underlying DAG can be recovered in principle, but the corresponding optimization problems remain highly non-convex; as a result, attaining the global optimum is often not computationally straightforward.
A major algorithmic step was introduced by NOTEARS~\cite{zheng2018dags}, which replaced the discrete acyclicity constraint with a smooth equality constraint based on the matrix exponential.
This idea opened the door to continuous optimization methods for DAG learning and inspired several refinements, including acyclicity functions based on matrix powers~\cite{wei2020dags,pamfil2020dynotears}, log-determinant barriers~\cite{bello2022dagma}, likelihood-based formulations such as GOLEM~\cite{ng2020role}, and more recent continuous formulations such as CoLiDE~\cite{saboksayr2023colide}.
Despite their empirical success, these approaches still inherit a challenging non-convex optimization landscape and may exhibit degenerate first-order optimality conditions, so the acyclicity constraint can provide limited local information at feasible DAGs.
An alternative route bypasses explicit acyclicity constraints by first estimating a topological order, for instance through variance-based ordering ideas such as VARSORT and related Gaussian-DAG estimators~\cite{reisach2021beware,gao2022optimal}.
Order-based approaches can lead to simple and statistically sharp procedures, but their performance depends strongly on the validity and stability of the ordering criterion.
Recent alternatives further exploit particular algebraic relations between the DAG adjacency matrix and the precision matrix to guarantee population recovery in linear models~\cite{ajorlou2026build}. However, these procedures can be sensitive in low-sample regimes.

This paper takes a complementary route.
Rather than searching for a more sophisticated smooth relaxation over signed weighted graphs, we impose additional structure on the problem and focus on DAGs with non-negative edge weights.
The non-negativity assumption removes the cancellations that occur in powers of signed matrices.
As a consequence, acyclicity can be characterized by a simpler function applied directly to the adjacency matrix, yielding a smooth acyclicity constraint whose gradient does not vanish at DAGs, thereby avoiding the KKT degeneracy that affects several existing continuous formulations.
Building on this observation, we formulate a non-negative DAG learning estimator based on a regularized least-squares score and a log-determinant acyclicity constraint.
We solve the resulting constrained problem using the method of multipliers, which provides a principled mechanism for promoting acyclicity.
Although the optimization problem remains non-convex, the proposed formulation has a favorable population landscape.
In particular, under standard identifiability assumptions, we show that the true DAG is the unique global minimizer of the population augmented Lagrangian, that there are no spurious interior stationary points, and that the only acyclic KKT point is the true DAG.
Our numerical results further indicate that this favorable population behavior is reflected in finite-sample regimes, where the proposed method approaches the true graph as the number of observations increases and compares favorably with established continuous DAG-learning baselines.
To summarize, our main contributions are as follows.
\begin{itemize}
    \item We show that, in the non-negative setting, acyclicity can be characterized by a simple smooth function that does not rely on Hadamard products, thus avoiding degeneracy of the KKT conditions.
    \item We propose an augmented-Lagrangian algorithm based on the method of multipliers for non-negative DAG structure learning.
    \item We characterize the population optimization landscape and prove that, under identifiability, the true DAG is the unique global minimizer of the population augmented Lagrangian, there are no spurious interior stationary points, and the true DAG is the only acyclic KKT point.
    \item We provide a numerical evaluation showing that the proposed method exhibits consistent finite-sample behavior and competitive performance against continuous DAG-learning baselines.
\end{itemize}

The remainder of the paper is organized as follows.
\Cref{sec:fundamentals} reviews DAG structure learning and continuous acyclicity constraints.
\Cref{sec:nonnegative_dag_learning} introduces the non-negative formulation, the proposed log-determinant acyclicity function, and the method-of-multipliers algorithm.
\Cref{sec:optimization_landscape} studies the population optimization landscape.
\Cref{sec:numerical_evaluation} reports numerical experiments, and the appendix contains the proofs of the main population results.

\section{Fundamentals of DAG Structure Learning}\label{sec:fundamentals}
This section introduces fundamental concepts related to DAGs and formally states the DAG structure learning problem.

\subsection{DAGs, structural equation models, and problem statement}
Let $\ccalD=(\ccalV,\ccalE)$ be a directed graph with node set $\ccalV=\{1,\ldots,d\}$ and directed edge set $\ccalE\subseteq\ccalV\times\ccalV$. 
An edge $(i,j)\in\ccalE$ represents a directed link from node $i$ to node $j$.
The graph topology is encoded by a weighted adjacency matrix $\bbW\in\reals^{d\times d}$, whose entries satisfy $W_{ij}\neq 0$ if and only if $(i,j)\in\ccalE$.
A graph is acyclic if it contains no directed cycles.
Throughout, we let $\mbD$ denote the set of matrices in $\reals^{d\times d}$ whose support corresponds to a DAG.

DAG structure learning aims to infer the unknown connectivity encoded in $\bbW$ from nodal observations.
Let $\bbX := [\bbx_1,\ldots,\bbx_n]\in\reals^{d\times n}$ collect $n$ observed graph signals, with $\bbx_i \in \reals^d$ denoting the $i$-th observation.
We focus on the common linear structural equation model (SEM)
\begin{equation}
    \bbX = \bbW^\top \bbX + \bbZ,
    \label{eq:sem}
\end{equation}
where $\bbZ\in\reals^{d\times n}$ collects exogenous noise samples.
Unless otherwise stated, the columns of $\bbZ$ are assumed independent and identically distributed (i.i.d.) random vectors with covariance matrix $\bbSigma_{\bbz} = \sigma^2\bbI$.
Independence of the exogenous variables is a crucial requirement for causal interpretation and identifiability in SEMs~\cite[pp. 83--84]{loh2014high,peters2017elements}.

Given $\bbX$, the DAG structure encoded in $\bbW$ can be inferred from the observed data by solving the optimization problem
\begin{equation}\label{eq:dag_learning} 
    \min_{\bbW}  F \left( \bbW; \bbX \right) \quad \mathrm{s.t.} \quad \bbW \in \mbD,
\end{equation}
where $F(\bbW;\bbX)$ is a data-dependent score function that captures the relation between $\bbW$ and the observations.
The difficulty in \eqref{eq:dag_learning} lies in the feasible set $\mbD$.
Directly enforcing acyclicity is a combinatorial endeavor and exact score-based DAG learning is NP-hard in general~\cite{maxwell1997efficient,chickering2004large}.

\subsection{Continuous acyclicity constraints}
Modern approaches to DAG structure learning advocate for replacing the discrete constraint $\bbW\in\mbD$ with a smooth equality constraint $h(\bbW)=0$, where $h:\reals^{d\times d}\mapsto\reals$ is a differentiable function whose zero level set coincides with $\mbD$.
As a consequence, the acyclicity function $h$ satisfies $h(\bbW)=0$ if and only if $\bbW \in \mbD$.
The discrete constraint in \eqref{eq:dag_learning} can then be replaced by the continuous formulation
\begin{equation}\label{eq:dag_learning_cont} 
    \min_{\bbW}  F \left( \bbW; \bbX \right) \quad \mathrm{s.t.} \quad h(\bbW) = 0.
\end{equation}
Under this reformulation, DAG learning becomes a continuous constrained optimization problem that can be addressed with first-order methods.

Relying on continuous functions to enforce acyclicity was pioneered in~\cite{zheng2018dags} via the acyclicity function
\begin{equation}\label{eq:notears}
	h_{\mathrm{notears}}(\bbW) = \tr \left( e^{\bbW \circ \bbW} \right) - d,
\end{equation}
where $\circ$ denotes the Hadamard (entry-wise) product.
Subsequent works proposed alternative smooth characterizations based on matrix powers~\cite{wei2020dags,pamfil2020dynotears} and log-determinants.
In particular,~\cite{bello2022dagma} considers
\begin{equation}\label{eq:dagma}
		h_{\mathrm{dagma}}^s(\bbW) = d \log (s) - \log\det \left( s\bbI - \bbW \circ \bbW \right),
\end{equation}
with $s\in\reals_+$ chosen so that the log-determinant is well defined.
This barrier-type characterization has favorable numerical properties and has been shown to perform well in practice.

Continuous acyclicity constraints have substantially changed the algorithmic landscape of DAG structure learning.
Replacing the combinatorial constraint $\bbW \in \mbD$ with the acyclicity condition $h(\bbW) = 0$ enables the use of standard continuous optimization methods.
This viewpoint has led to scalable algorithms with competitive empirical performance in settings where exhaustive combinatorial search is infeasible.
However, the resulting optimization problems remain fundamentally challenging.
In particular, the term $\bbW \circ \bbW$ introduces a non-convex composition that complicates the optimization landscape.
Moreover, this term makes the gradient of the acyclicity constraint vanish at every DAG, i.e., $\nabla h(\bbW) = \bm{0}$ for every $\bbW \in \mbD$.
As observed in~\cite{wei2020dags}, this leads to degenerate Karush--Kuhn--Tucker (KKT) conditions
\begin{equation}
    \nabla F(\bbW) + \lambda \nabla h(\bbW) = \bbzero,
\end{equation}
since a DAG $\bbW \in \mbD$ can be a KKT point of \eqref{eq:dag_learning_cont} only if it is an unconstrained stationary point of the score function, i.e., $\nabla F(\bbW) = \bbzero$.
Consequently, when $F$ is convex, this condition holds only at unconstrained global minimizers of $F$.

To overcome these limitations, we henceforth assume that $\bbW$ \emph{has non-negative weights} and propose a simpler acyclicity function that does not depend on the Hadamard product.
This yields a more amenable optimization landscape, avoids the aforementioned degeneracy at feasible DAGs, and opens the door to stronger guarantees for the global behavior of the resulting estimator.

\section{Non-Negative DAG Structure Learning}\label{sec:nonnegative_dag_learning}

We now specialize the DAG structure learning problem to the case where the edge weights are non-negative.
This restriction is natural in settings where directed interactions encode excitatory, additive, or non-inhibitory relations, and includes the important case of binary adjacency matrices; see e.g.,~\cite{seifert2023causal}.
From an optimization viewpoint, non-negativity removes the sign cancellations that arise in matrix powers, enabling acyclicity to be characterized through simpler smooth functions that do not require the Hadamard product.

When the observations follow the linear SEM in \eqref{eq:sem}, a standard choice is to estimate the weighted adjacency matrix by combining a least-squares data-fitting term with a sparsity-promoting penalty. Since $\bbW$ is constrained to be entrywise non-negative, the usual $\ell_1$ penalty reduces to a linear penalty in the entries of $\bbW$. This yields the estimator
\begin{alignat}{2}
    \hbW =
    \operatorname*{arg\,min}_{\bbW}
    \quad &
    \frac{1}{2n}\| \bbX-\bbW^\top\bbX \|_F^2
    +
    \alpha_n \sum_{i\neq j} W_{ij}
    \label{eq:nonneg_dag_learning}
    \\
    \mathrm{s.t.}
    \quad &
    \bbW\geq \bbzero,\qquad
    h(\bbW)=0,\qquad
    \rho(\bbW)<s, \nonumber
\end{alignat}
where $\alpha_n>0$ is a tunable parameter that controls the trade-off between data fidelity and sparsity and is chosen so that $\alpha_n\to 0$ as $n \to \infty$.
Moreover, $h$ is a continuous acyclicity function, and the spectral-radius condition specifies the domain on which $h$, introduced below, is well defined.
Note that the diagonal entries are excluded from the sparsity penalty since self-loops are incompatible with acyclicity.
Equivalently, one may explicitly impose $\diag(\bbW)=\bbzero$. 
By writing the estimator as a single global minimizer, \eqref{eq:nonneg_dag_learning} anticipates a uniqueness property.
Although this property is not immediate from the formulation, it will be established in \cref{sec:optimization_landscape}.

\subsection{Acyclicity over non-negative matrices}

Smooth acyclicity constraints are central to modern continuous approaches for DAG learning.
As discussed in~\cite{zheng2018dags}, an effective acyclicity function $h(\bbW)$ should be smooth, have a gradient that can be evaluated efficiently, and satisfy $h(\bbW)=0$ if and only if $\bbW\in\mbD$.
By exploiting the non-negativity of $\bbW$ and drawing inspiration from the log-determinant characterization in \eqref{eq:dagma}, we introduce a zero-level acyclicity constraint that satisfies these requirements without relying on Hadamard products.

\begin{proposition}\label{prop:nonneg_logdet}
For any matrix $\bbW\in\reals_+^{d\times d}$ whose spectral radius satisfies $\rho(\bbW)<s$ for some $s\in\reals_+$, define
\begin{equation}\label{eq:nonneg_logdet_constraint}
    h(\bbW)
    :=
    d\log(s)-\log\det(s\bbI-\bbW).
\end{equation}
The proposed acyclicity constraint is $h(\bbW)=0$, and its gradient is given by
\begin{equation}\label{eq:nonneg_logdet_gradient}
    \nabla h(\bbW)
    =
    (s\bbI-\bbW)^{-\top}.
\end{equation}
Moreover, $h(\bbW)\geq 0$ for every $\bbW\in\reals_+^{d\times d}$ such that $\rho(\bbW)<s$, and
\begin{equation}
    h(\bbW)=0
    \quad\Longleftrightarrow\quad
    \bbW\in\mbD.
\end{equation}
\end{proposition}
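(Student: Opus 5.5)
The plan is to work with the eigenvalues of $\bbW$ and the Neumann series of $(s\bbI-\bbW)^{-1}$; non-negativity will be used only through the trace of powers of $\bbW$.

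\emph{Gradient.} Since $\rho(\bbW)<s$, the matrix $s\bbI-\bbW$ is invertible. The standard identity $\nabla_{\bbA}\log\det\bbA=\bbA^{-\top}$, combined with the chain rule for $\bbA=s\bbI-\bbW$ (whose derivative with respect to $\bbW$ is $-$identity), gives $\nabla h(\bbW)=(s\bbI-\bbW)^{-\top}$ directly.

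\emph{Series expansion.} Let $\lambda_1,\dots,\lambda_d$ be the eigenvalues of $\bbW$, all satisfying $|\lambda_i|<s$. Then $\det(s\bbI-\bbW)=\prod_i(s-\lambda_i)$, so
\begin{equation*}
h(\bbW)=-\sum_{i}\log\!\left(1-\tfrac{\lambda_i}{s}\right).
\end{equation*}
Here we take the real determinant, which is positive: the factors for complex eigenvalues come in conjugate pairs, and real eigenvalues satisfy $\lambda_i<s$. Expanding the logarithm as a power series, which converges because $|\lambda_i/s|<1$, gives
\begin{equation*}
h(\bbW)=\sum_{k\ge1}\frac{1}{k s^k}\sum_i\lambda_i^k=\sum_{k\ge1}\frac{\tr(\bbW^k)}{k s^k}.
\end{equation*}
Equivalently, this follows from $\log\det(\bbI-\bbW/s)=\tr\log(\bbI-\bbW/s)$ together with the Neumann expansion. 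Since $\bbW\ge\bbzero$ entrywise, every $\bbW^k$ is entrywise non-negative, so each $\tr(\bbW^k)\ge0$. Hence $h(\bbW)\ge0$.

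\emph{Characterization of zeros.} Every term in the series is non-negative, so $h(\bbW)=0$ if and only if $\tr(\bbW^k)=0$ for all $k\ge1$. Next, $\tr(\bbW^k)=\sum_{i_1,\dots,i_k}W_{i_1i_2}W_{i_2i_3}\cdots W_{i_ki_1}$ is a sum of non-negative products over closed walks of length $k$. It therefore vanishes exactly when the support graph has no closed walk of length $k$. A graph is acyclic if and only if it has no closed walk of any length; it suffices to consider $k\le d$, since any directed cycle has length at most $d$. This proves $h(\bbW)=0\iff\bbW\in\mbD$.

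The converse direction can also be read off the spectrum. If $\bbW\in\mbD$, nodes can be permuted so that $\bbW$ is strictly upper triangular, hence nilpotent with all eigenvalues zero, giving $\det(s\bbI-\bbW)=s^d$ and $h=0$.

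\emph{Main obstacle.} The main point requiring care is the series step. One must justify the expansion of $\log\det$ in terms of traces, and that the real $\log\det$ agrees with the sum of complex logarithms; both follow from $\rho(\bbW/s)<1$. One must also note where non-negativity is essential: it prevents cancellation among closed-walk weights, which is exactly why the Hadamard square is no longer needed.
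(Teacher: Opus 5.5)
Your proof is correct and follows essentially the same route as the paper: rewrite $h$ as $-\log\det(\bbI - s^{-1}\bbW)$, expand it as $\sum_{k\geq 1}\tr(\bbW^k)/(k s^k)$, use non-negativity to make every term non-negative, read off acyclicity from the closed-walk interpretation of $\tr(\bbW^k)$, and use nilpotency of a permuted DAG for the converse. The only difference is that you obtain the series from the eigenvalues with scalar logarithms rather than from the matrix Mercator series and $\log\det=\tr\log$, which has the small advantage of making explicit the positivity of the determinant and the choice of logarithm branch that the paper passes over.
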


\begin{proof}
The gradient expression in \eqref{eq:nonneg_logdet_gradient} follows directly from standard tools from differential calculus.
We now prove the acyclicity characterization.
First, rewrite $h$ as
\begin{align}
    h(\bbW)
    &=
    d\log(s)-\log(s^d)-\log\det(\bbI-s^{-1}\bbW)
    \nonumber\\
    &=
    -\log\det(\bbI-s^{-1}\bbW).
    \label{eq:ldet_rescaled}
\end{align}
Since $\rho(s^{-1}\bbW)<1$, the matrix logarithm is well defined, and applying the Mercator series gives 
\begin{equation}
    \log(\bbI-s^{-1}\bbW) =
    -\sum_{k=1}^{\infty}\frac{s^{-k}\bbW^k}{k}.
\end{equation}
Then, using the identity $\log\det(\bbM)=\tr(\log\bbM)$, yields
\begin{align}
    h(\bbW) &=
    -\tr \left( \log \left( \bbI - s^{-1}\bbW \right) \right) =
    \tr\left(
    \sum_{k=1}^{\infty}\frac{s^{-k}\bbW^k}{k}
    \right) \nonumber \\
    &=
    \sum_{k=1}^{\infty}\frac{\tr(\bbW^k)}{k s^k}.
    \label{eq:ldet_trace_series}
\end{align}
Because $\bbW$ is entrywise non-negative, every diagonal entry of $\bbW^k$ is non-negative, and hence $\tr(\bbW^k)\geq 0$ for all $k\geq 1$.
It follows from \eqref{eq:ldet_trace_series} that $h(\bbW)\geq 0$.

It remains to characterize the zero level set.
Since all terms in \eqref{eq:ldet_trace_series} are non-negative, $h(\bbW)=0$ holds if and only if $\tr(\bbW^k)=0$ for every $k\geq 1$.
The diagonal entry $[\bbW^k]_{ii}$ is the total weight of all directed closed walks of length $k$ that start and end at node $i$.
Thus, if the graph associated with $\bbW$ contained a directed cycle, the trace of a suitable power of $\bbW$ would be strictly positive.
Therefore, $h(\bbW)=0$ implies that $\bbW$ represents a DAG.
Conversely, if $\bbW\in\mbD$, then $\bbW$ is nilpotent up to a permutation, so $\tr(\bbW^k)=0$ for every $k\geq 1$ and \eqref{eq:ldet_trace_series} gives $h(\bbW)=0$.
\end{proof}


The non-negativity assumption is essential in \cref{prop:nonneg_logdet}.
Indeed, the series in \eqref{eq:ldet_trace_series} expresses $h$ as a weighted sum of closed-walk contributions.
When edge weights are non-negative, every directed cycle contributes a non-negative term and cannot be masked by cancellations from other closed walks.
As a result, the zero level set of $h$ coincides exactly with the set of DAGs.
This cancellation-free interpretation is precisely what is lost when signed weights are allowed.

Although $h$ is not convex, it retains a favorable first-order structure that is absent from Hadamard-based acyclicity functions.
In particular, \eqref{eq:nonneg_logdet_gradient} does not vanish at DAGs, so feasible acyclic matrices are not automatically stationary points of the acyclicity function, hence preventing the degeneracy of the KKT conditions.

\subsection{Algorithmic implementation via the method of multipliers}

We solve \eqref{eq:nonneg_dag_learning} using the method of multipliers, an augmented-Lagrangian approach for equality-constrained optimization~\cite[Ch. 4.2]{bertsekas2016nonlinear}.
The appeal of this method is that, in settings where the required regularity and convexity assumptions hold, minimizing a sequence of augmented Lagrangians recovers a solution of the original constrained problem.
In the present setting, the acyclicity constraint remains nonconvex, so these classical guarantees cannot be invoked directly.
Nevertheless, the method provides a principled way to drive the acyclicity violation to zero while retaining a smooth objective at each iteration.

Let us denote the score function in \eqref{eq:nonneg_dag_learning} by
\begin{equation}
    F_n(\bbW)
    :=
    \frac{1}{2n}\|\bbX-\bbW^\top\bbX\|_F^2
    +
    \alpha_n\sum_{i\neq j} W_{ij}.
\end{equation}
For a multiplier $\lambda \in \reals$ and a penalty parameter $c>0$, the augmented Lagrangian associated with the acyclicity constraint is
\begin{equation}\label{eq:augmented_lagrangian}
    L_c(\bbW,\lambda)
    :=
    F_n(\bbW)
    +
    \lambda h(\bbW)
    +
    \frac{c}{2}h(\bbW)^2.
\end{equation}
Only the acyclicity constraint is incorporated into \eqref{eq:augmented_lagrangian}.
The non-negativity constraint is kept explicit, since it can be handled directly by projection onto the non-negative orthant.
Similarly, the spectral-radius condition is treated as a domain restriction for the log-determinant term.

Starting from $\lambda^{(0)}$ and $c^{(0)}>0$, the method generates a sequence of iterates by repeating the three following steps.

\vspace{3mm}
\noindent\textbf{Step 1.}
First, the weighted adjacency matrix is updated by solving
\begin{equation}\label{eq:al_w_update}
    \bbW^{(k+1)} =
    \operatorname*{arg\,min}_{\bbW\geq\bbzero,\ \rho(\bbW)<s}
    L_{c^{(k)}}(\bbW,\lambda^{(k)}).
\end{equation}
In practice, this inner problem can be addressed with projected first-order iterations.
For the log-determinant acyclicity function in \cref{prop:nonneg_logdet}, the gradient of the augmented Lagrangian is
\begin{equation}\label{eq:grad_augmented_lagrangian}
    \nabla_{\bbW}L_c(\bbW,\lambda)
    =
    \nabla F_n(\bbW)
    +
    \left(\lambda + c h(\bbW)\right) \nabla h(\bbW). 
\end{equation}
The score-gradient term is given by
\begin{equation}\label{eq:grad_score}
    \nabla F_n(\bbW)
    =
    \hbSigma_{\bbx}(\bbW-\bbI)
    +
    \alpha_n(\bbone\bbone^\top-\bbI),
\end{equation}
where $\hbSigma_{\bbx} = n^{-1}\bbX\bbX^\top$ is the sample covariance matrix.
Starting from $\bbW_0 = \bbW^{(k)}$ and using an inner stepsize $\eta>0$, the projected-gradient update for \eqref{eq:al_w_update} is
\begin{equation}\label{eq:projected_gradient_step}
    \bbW_{\ell+1}
    =
    \left[
    \bbW_\ell
    -
    \eta
    \nabla_{\bbW}L_{c^{(k)}}(\bbW_\ell,\lambda^{(k)})
    \right]_+,
\end{equation}
where $[\cdot]_+$ denotes the entrywise projection onto the non-negative orthant.
The stepsize is chosen so that the iterates remain in the domain $\rho(\bbW_\ell)<s$ of the log-determinant term.
Once the inner stopping criterion is met, the next outer iterate is set to the last inner iterate, i.e., $\bbW^{(k+1)} = \bbW_{\ell+1}$.
Alternatively, accelerated methods such as FISTA can also be employed~\cite{beck2009fast, beck2017first}.

\vspace{3mm}
\noindent\textbf{Step 2.}
Second, the Lagrange multiplier is updated according to the current constraint violation,
\begin{equation}\label{eq:al_lambda_update}
    \lambda^{(k+1)}
    =
    \lambda^{(k)}
    +
    c^{(k)}h(\bbW^{(k+1)}).
\end{equation}
This update can also be interpreted as a gradient ascent step since the constraint violation corresponds to the gradient of $L_{c^{(k)}}(\bbW^{(k+1)},\lambda)$ with respect to $\lambda$.

\vspace{3mm}
\noindent\textbf{Step 3.}
Finally, the penalty parameter is increased only when the acyclicity violation is not reduced sufficiently.
A standard update is
\begin{equation}\label{eq:al_c_update}
    c^{(k+1)}
    =
    \begin{cases}
        \beta c^{(k)}, &
        \text{if } h(\bbW^{(k+1)})>\gamma h(\bbW^{(k)}),\\
        c^{(k)}, &
        \text{otherwise},
    \end{cases}
\end{equation}
where $\beta>1$ and $0<\gamma<1$.
This adaptive rule increases the penalty only when the current value of $c^{(k)}$ is not producing a sufficient decrease in the acyclicity residual.

The resulting procedure follows the same logic as classical multiplier methods: the quadratic term penalizes violations of $h(\bbW)=0$, while the multiplier update corrects the linearization of the constraint across iterations.
The computational complexity is determined by the solution of the inner problem in \eqref{eq:al_w_update}.
Each projected-gradient step requires evaluating \eqref{eq:grad_augmented_lagrangian}, with cost $O(d^3)$ due to the computation of $(s\bbI-\bbW)^{-\top}$ and the associated matrix products.
In practice, we observe that only a small number of outer iterations is needed to satisfy the acyclicity constraint.
The nonconvexity of the $h$ constraint prevents us from invoking the classical convergence theory for augmented Lagrangian methods.
Nevertheless, the numerical evaluation in \cref{sec:numerical_evaluation} suggests that, in practice, the estimate provided by the proposed algorithm converges to the true DAG as the number of samples goes to infinity.
To further understand this promising behavior, we next study the optimization landscape of the problem at hand.

\section{Population Landscape Analysis}\label{sec:optimization_landscape}
This section analyzes how the non-negativity assumption and the simpler acyclicity constraint yield a benign optimization landscape.
To that end, we now shift our attention to the augmented Lagrangian of the DAG learning problem in the population regime.

Let $\bbW_0$ denote the weighted adjacency matrix of the true non-negative DAG, and let $\bbx$ be a generic observation generated according to the SEM in \eqref{eq:sem}.
At the population level, the empirical least-squares term is replaced by its expectation, leading to the score function 
\begin{equation}\label{eq:population_score}
    \bar{F}(\bbW) \!:=\!
    \mbE\!\left[
        \| \bbx \!-\! \bbW^\top \!\bbx \|_2^2
    \right] \!=\!
    \tr\!\left(
        (\bbI \!-\! \bbW)^\top
        \bbSigma_{\bbx}
        (\bbI \!-\! \bbW)
    \right),
\end{equation}
where $\bbSigma_{\bbx} = \mbE[\bbx\bbx^\top]$ denotes the population covariance matrix of the observations.
The identifiability of $\bbW_0$ is tied to the exogenous noise distribution.
In the analysis below, we focus on the normalized case where the exogenous noise has identity covariance, meaning that $\bbSigma_{\bbz} = \mbE[\bbz\bbz^\top] = \bbI$.
The extension to homoscedastic noise with covariance $\sigma^2\bbI$, or to heteroscedastic noise with known covariance up to a scalar factor, is immediate after the corresponding rescaling.
What is essential is that the noise model ensures identifiability of the true DAG in the population regime~\cite{loh2014high,peters2017elements}.

Considering the score function in \eqref{eq:population_score} and the acyclicity constraint in \cref{prop:nonneg_logdet}, the population augmented Lagrangian for a multiplier $\lambda\in\reals_+$ and a penalty parameter $c>0$ is given by
\begin{equation}\label{eq:population_augmented_lagrangian}
    \bar{L}_c(\bbW,\lambda) :=
    \bar{F}(\bbW) + \lambda h(\bbW) + \frac{c}{2}h(\bbW)^2.
\end{equation}
The results below characterize the global and first-order behavior of \eqref{eq:population_augmented_lagrangian} over the non-negative domain of the log-determinant constraint.

For the landscape analysis, we focus on the acyclicity constraint with $s=1$, corresponding to the function $h(\bbW)=-\log\det(\bbI-\bbW)$.
The feasible set over which the augmented Lagrangian will be optimized can be compactly written as
\begin{equation}\label{eq:population_domain}
    \ccalW_1
    :=
    \left\{
    \bbW\in\reals_+^{d\times d}:
    \rho(\bbW)<1
    \right\}.
\end{equation}
Under this setting, the population DAG learning model is identifiable in the following sense~\cite{loh2014high}:
\begin{equation}\label{eq:population_identifiability}
    \bbW_0 = \arg\min_\bbW  \bar{F}(\bbW) \quad \text{s.t.} \quad \bbW\in\ccalW_1,\ h(\bbW)=0.
\end{equation}

We next characterize the population augmented Lagrangian through three complementary properties: its global minimizer, its classical stationary points, and its acyclic KKT points.
The first result states that the true DAG $\bbW_0$ is the unique global minimizer of $\bar{L}_c(\cdot,\lambda)$ over the full domain $\ccalW_1$.

\begin{theorem}[Unique global minimizer]\label{thm:population_unique_minimizer}
Assume that the data follow the SEM in \eqref{eq:sem}, where $\bbW_0\in\ccalW_1$ is the weighted adjacency matrix of the true DAG and the covariance of the exogenous noise is $\mbE[\bbz\bbz^\top ] = \bbI$.
Let the identifiability condition in \eqref{eq:population_identifiability} hold.
Then, for every $c>0$ and every $\lambda\geq 2$, $\bbW_0$ is the unique global minimizer of the population augmented Lagrangian over $\ccalW_1$, namely
\begin{equation}
    \bbW_0
    =
    \operatorname*{arg\,min}_{\bbW\in\ccalW_1}
    \bar{L}_c(\bbW,\lambda).
\end{equation}
\end{theorem}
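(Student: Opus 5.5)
The plan is to rewrite the population score through a change of variables that links $\bar F$ and $h$ directly, and then to bound the augmented Lagrangian from below using the singular values of a single matrix.

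\textbf{Rewriting $\bar F$.} Under the SEM \eqref{eq:sem} with $\mbE[\bbz\bbz^\top]=\bbI$ we have $\bbx=(\bbI-\bbW_0)^{-\top}\bbz$, so $\bbSigma_{\bbx}=\bbB^{-\top}\bbB^{-1}$ with $\bbB:=\bbI-\bbW_0$. Define $\bbM:=\bbB^{-1}(\bbI-\bbW)$. Substituting into \eqref{eq:population_score} gives
\begin{equation*}
\bar F(\bbW)=\|\bbB^{-1}(\bbI-\bbW)\|_F^2=\|\bbM\|_F^2 .
\end{equation*}
Since $\bbW_0\in\mbD$, it is nilpotent up to a permutation, so $\det\bbB=1$. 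Hence $\det\bbM=\det(\bbI-\bbW)$.

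\textbf{Rewriting $h$.} For $\bbW\in\ccalW_1$ every real eigenvalue of $\bbI-\bbW$ is positive and complex eigenvalues come in conjugate pairs, so $\det(\bbI-\bbW)>0$. With $s=1$ this gives $h(\bbW)=-\log\det\bbM=-\sum_i\log\sigma_i$, where $\sigma_i>0$ are the singular values of $\bbM$.

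\textbf{Lower bound.} By \cref{prop:nonneg_logdet}, $h\ge 0$ on $\ccalW_1$. Therefore, for $\lambda\ge2$ and $c>0$,
\begin{equation*}
\bar L_c(\bbW,\lambda)\;\ge\;\bar F(\bbW)+2h(\bbW)\;=\;\sum_{i=1}^d\left(\sigma_i^2-2\log\sigma_i\right)\;\ge\; d .
\end{equation*}
The last step holds because the scalar function $t\mapsto t^2-2\log t$ on $t>0$ is minimized uniquely at $t=1$, where it equals $1$.

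\textbf{Attainment at $\bbW_0$.} At $\bbW=\bbW_0$ we have $\bbM=\bbI$ and $h(\bbW_0)=0$, so $\bar L_c(\bbW_0,\lambda)=d$. Thus $\bbW_0$ is a global minimizer.

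\textbf{Uniqueness.} Suppose $\bbW\in\ccalW_1$ also attains the value $d$. Then every inequality in the chain above is tight. In particular, the term $\tfrac c2 h(\bbW)^2$ must vanish, and since $c>0$ this forces $h(\bbW)=0$. By \cref{prop:nonneg_logdet}, $\bbW$ is then a DAG. Tightness also gives $\bar F(\bbW)=d=\bar F(\bbW_0)$, so $\bbW$ is a minimizer of the constrained problem in \eqref{eq:population_identifiability}. The identifiability assumption then yields $\bbW=\bbW_0$.

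\textbf{Main obstacle.} The crux is spotting the change of variables $\bbM=\bbB^{-1}(\bbI-\bbW)$ together with $\det\bbB=1$. These couple the Frobenius score and the log-determinant constraint into the single separable spectral bound $\sum_i(\sigma_i^2-2\log\sigma_i)$, which is exactly why the threshold $\lambda\ge2$ appears. A secondary detail is justifying $\det(\bbI-\bbW)>0$ on $\ccalW_1$, so that $h$ is real-valued and equals $-\sum_i\log\sigma_i$; this follows from the eigenvalue argument in the second step.
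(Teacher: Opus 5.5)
Your proof is correct and follows the paper's overall structure. The shared steps are the normalization $(\bbI-\bbW_0)^{-1}(\bbI-\bbW)$ together with $\det(\bbI-\bbW_0)=1$, which gives $\bar{F}(\bbW)=\sum_i\sigma_i^2$ and $h(\bbW)=-\sum_i\log\sigma_i$. Both proofs then show the lower bound $d$ is attained at $\bbW_0$, and obtain uniqueness from tightness forcing $h(\bbW)=0$ followed by identifiability.

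The only real difference is the central inequality. You discard $(\lambda-2)h+\tfrac{c}{2}h^2\geq 0$ and bound $\sigma_i^2-2\log\sigma_i\geq 1$ term by term. The paper instead applies AM--GM to get $\bar{F}(\bbW)\geq d\,e^{-2h(\bbW)/d}$, and then minimizes the strictly convex scalar function $\phi_{\lambda,c}(t)=d\,e^{-2t/d}+\lambda t+\tfrac{c}{2}t^2$ over $t\geq 0$.

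The two routes are closely linked. Let $u=e^{-h/d}$ be the geometric mean of the $\sigma_i$. Then the paper's quantity $d\,e^{-2h/d}+2h$ equals $d(u^2-2\log u)$, so your termwise argument simply skips the AM--GM step. Your version is slightly more direct for this theorem and makes the threshold $\lambda=2$ transparent. The paper's reduction to a function of $h$ alone is what it reuses, via $\phi'_{\lambda,c}$, in the stationary-point analysis of \cref{thm:population_stationary_points}.

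Your eigenvalue argument for $\det(\bbI-\bbW)>0$ on $\ccalW_1$ also fills in a detail the paper only asserts.
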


See \cref{app:proof_population_unique_minimizer} for the proof.
\cref{thm:population_unique_minimizer} shows that the true DAG structure encoded in $\bbW_0$ is the only matrix in $\ccalW_1$ that minimizes the population augmented Lagrangian.
Thus, the identifiability of $\bbW_0$ over the constrained DAG set is lifted to the augmented-Lagrangian objective over the larger non-negative domain.

However, global uniqueness alone does not describe what first-order methods may encounter in a non-convex landscape.
We therefore turn to classical stationary points of the smooth augmented-Lagrangian objective.

\begin{theorem}[Classical stationary points]\label{thm:population_stationary_points}
Assume that the data follow the SEM in \eqref{eq:sem}, where $\bbW_0\in\ccalW_1$ is the weighted adjacency matrix of the true DAG and the covariance of the exogenous noise is $\mbE[\bbz\bbz^\top ] = \bbI$.
Let the identifiability condition in \eqref{eq:population_identifiability} hold, and let $c>0$.
Then the following statements hold:
\begin{enumerate}[label=(\roman*)]
    \item If $\lambda>2$, there is no $\bbW\in\ccalW_1$ such that $\nabla_{\bbW}\bar{L}_c(\bbW,\lambda)=\bbzero$.
    \item If $\lambda=2$, then
    \begin{equation}
        \nabla_{\bbW}\bar{L}_c(\bbW,2)=\bbzero
        \qquad\Longleftrightarrow\qquad
        \bbW=\bbW_0.
    \end{equation}
\end{enumerate}
\end{theorem}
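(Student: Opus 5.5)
The plan is to write the first-order condition explicitly, multiply it by $(\bbI-\bbW)^\top$ to obtain a matrix identity, and take determinants. Both statements should then follow from the non-negativity $h\geq 0$ in \cref{prop:nonneg_logdet} together with identifiability \eqref{eq:population_identifiability}.

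\textbf{Step 1: the stationarity condition.} Since $\bbSigma_{\bbz}=\bbI$ and $\bbx=(\bbI-\bbW_0)^{-\top}\bbz$, we have $\bbSigma_{\bbx}=(\bbI-\bbW_0)^{-\top}(\bbI-\bbW_0)^{-1}$. From \eqref{eq:population_score}, $\nabla\bar F(\bbW)=-2\bbSigma_{\bbx}(\bbI-\bbW)$. With $s=1$ in \cref{prop:nonneg_logdet}, $\nabla h(\bbW)=(\bbI-\bbW)^{-\top}$. Set $\mu(\bbW):=\lambda+c\,h(\bbW)$; then $\mu(\bbW)\geq\lambda$ because $h\geq 0$ on $\ccalW_1$. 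The condition $\nabla_{\bbW}\bar L_c(\bbW,\lambda)=\bbzero$ reads
\begin{equation*}
-2\bbSigma_{\bbx}(\bbI-\bbW)+\mu(\bbW)(\bbI-\bbW)^{-\top}=\bbzero .
\end{equation*}
Right-multiplying by $(\bbI-\bbW)^\top$, which is invertible since $\rho(\bbW)<1$, this is equivalent to
\begin{equation*}
\bbSigma_{\bbx}(\bbI-\bbW)(\bbI-\bbW)^\top=\tfrac{\mu(\bbW)}{2}\bbI .
\end{equation*}

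\textbf{Step 2: take determinants.} Because $\bbW_0$ is a DAG, it is nilpotent up to permutation, so $\det(\bbI-\bbW_0)=1$ and $\det\bbSigma_{\bbx}=1$. For $\bbW\in\ccalW_1$, the definition of $h$ gives $\det(\bbI-\bbW)=e^{-h(\bbW)}$; this is positive, since the eigenvalues of $\bbI-\bbW$ have positive real part and complex ones come in conjugate pairs. The determinant of the identity in Step 1 therefore yields
\begin{equation*}
e^{-2h(\bbW)}=\left(\tfrac{\mu(\bbW)}{2}\right)^{d}.
\end{equation*}
The left side is at most $1$, whereas the right side is at least $(\lambda/2)^d$.
\begin{itemize}
\item If $\lambda>2$, the right side exceeds $1$, a contradiction. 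This proves (i).
\item If $\lambda=2$, equality forces $h(\bbW)=0$ and hence $\mu=2$. By \cref{prop:nonneg_logdet}, $\bbW$ is then an acyclic matrix in $\ccalW_1$.
\end{itemize}

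\textbf{Step 3: identify $\bbW$ when $\lambda=2$.} With $\mu=2$, the identity in Step 1 gives $(\bbI-\bbW)^\top\bbSigma_{\bbx}(\bbI-\bbW)=\bbI$ (conjugating $\bbSigma_{\bbx}(\bbI-\bbW)(\bbI-\bbW)^\top=\bbI$). Hence $\bar F(\bbW)=d$. The same computation at $\bbW_0$ gives $(\bbI-\bbW_0)^\top\bbSigma_{\bbx}(\bbI-\bbW_0)=\bbI$, so $\bar F(\bbW_0)=d$. Thus $\bbW$ is a feasible acyclic matrix attaining the constrained minimum value, and the uniqueness in \eqref{eq:population_identifiability} gives $\bbW=\bbW_0$.

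For the converse, at $\bbW_0$ we have $h=0$, so $\mu=2$. Moreover $\bbSigma_{\bbx}(\bbI-\bbW_0)=(\bbI-\bbW_0)^{-\top}$, so the gradient equals $-2(\bbI-\bbW_0)^{-\top}+2(\bbI-\bbW_0)^{-\top}=\bbzero$. This proves (ii).

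I expect no serious obstacle. The delicate points are the sign of $\det(\bbI-\bbW)$, so that $\log\det$ and the determinant identity are valid, and checking that the gradient scaling (the factor $2$ in $\bar F$) matches the threshold $\lambda=2$.
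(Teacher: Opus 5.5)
Your proof is correct and takes essentially the same route as the paper. You reduce stationarity to $\bbSigma_{\bbx}(\bbI-\bbW)(\bbI-\bbW)^\top=\tfrac{\mu}{2}\bbI$, which is similar to the paper's $2\bbC\bbC^\top=(\lambda+ch(\bbW))\bbI$. You take determinants directly to get the $d$-th power of the paper's scalar equation $2e^{-2h(\bbW)/d}=\lambda+ch(\bbW)$, and then close with the comparison $e^{-2h}\leq 1\leq(\mu/2)^d$ instead of the monotonicity of $\phi'_{\lambda,c}$ (and also justify $\det(\bbI-\bbW)>0$, which the paper takes for granted); the identifiability step and the gradient check at $\bbW_0$ match the paper's.
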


See \cref{app:proof_population_stationary_points} for the proof.
\cref{thm:population_stationary_points} rules out spurious interior stationary points of the population augmented Lagrangian.
Thus, any additional first-order candidates relevant to projected methods must be understood through the boundary of the non-negative cone, which motivates the KKT analysis below.

Since the spectral-radius condition in $\ccalW_1$ is open, the only active constraint in the first-order analysis over $\ccalW_1$ is $\bbW\geq \bbzero$.
Thus, a matrix $\bbW\in\ccalW_1$ is a KKT point of the problem of minimizing $\bar{L}_c(\cdot,\lambda)$ over $\ccalW_1$ if and only if there exists a multiplier matrix $\bbGamma\in\reals_+^{d\times d}$ such that
\begin{equation}\label{eq:kkt_augmented_lagrangian}
    \nabla_{\bbW}\bar{L}_c(\bbW,\lambda)-\bbGamma=\bbzero,
    \qquad
    \bbGamma\circ\bbW=\bbzero.
\end{equation}
Equivalently,
\begin{equation}\label{eq:kkt_augmented_lagrangian_equiv}
    \nabla_{\bbW}\bar{L}_c(\bbW,\lambda)\geq \bbzero,
    \qquad
    \bbW\circ\nabla_{\bbW}\bar{L}_c(\bbW,\lambda)=\bbzero.
\end{equation}

\begin{theorem}[Acyclic KKT points]\label{thm:acyclic_kkt_points}
Assume that the data follow the SEM in \eqref{eq:sem}, where $\bbW_0\in\ccalW_1$ is the weighted adjacency matrix of the true DAG and the covariance of the exogenous noise is $\mbE[\bbz\bbz^\top ] = \bbI$.
Let the identifiability condition in \eqref{eq:population_identifiability} hold, and let $c>0$.
If $\bbW\in\ccalW_1$ is a KKT point of $\bar{L}_c(\cdot,2)$ over $\ccalW_1$ and $h(\bbW)=0$, then
\begin{equation}
    \bbW=\bbW_0.
\end{equation}
\end{theorem}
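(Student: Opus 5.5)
The plan is to reduce the KKT system to a statement about the population score alone, and then close the argument with the identifiability condition \eqref{eq:population_identifiability}.

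\textbf{Step 1: gradient at an acyclic point.} Since $h(\bbW)=0$, the penalty contribution $c\,h(\bbW)\nabla h(\bbW)$ in \eqref{eq:grad_augmented_lagrangian} vanishes. With $\lambda=2$ the gradient therefore reduces to
\begin{equation*}
\bbG:=\nabla_{\bbW}\bar L_c(\bbW,2)=2\bbSigma_{\bbx}(\bbW-\bbI)+2(\bbI-\bbW)^{-\top}.
\end{equation*}
Write $\bbA=\bbI-\bbW$ and $\bbA_0=\bbI-\bbW_0$. The SEM gives $\bbSigma_{\bbx}=\bbA_0^{-\top}\bbA_0^{-1}$. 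Setting $\bbM:=\bbA_0^{-1}\bbA$, this becomes
\begin{equation*}
\bbG=2\bbA^{-\top}\bigl(\bbI-\bbM^\top\bbM\bigr).
\end{equation*}

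\textbf{Step 2: DAG identities.} Both $\bbW$ and $\bbW_0$ are DAGs, so each is nilpotent up to a permutation. Hence $\det\bbA=\det\bbA_0=1$, so $\det\bbM=1$, and $\tr(\bbA^{-1})=d$. Moreover $\bbA^{-1}=\sum_k\bbW^k$ is entrywise non-negative. With these identities, $\bar F(\bbW)=\|\bbM\|_F^2$ and $\bar F(\bbW_0)=d$.

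\textbf{Step 3: extract two scalar relations from \eqref{eq:kkt_augmented_lagrangian_equiv}.}

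\emph{Complementarity.} The condition $\bbW\circ\bbG=\bbzero$ implies $\langle\bbG,\bbW\rangle=0$. Using $\bbA^{-1}\bbW=\bbA^{-1}-\bbI$ and $\tr(\bbA^{-1})=d$, this becomes
\begin{equation*}
\|\bbM\|_F^2=\tr(\bbSigma_{\bbx}\bbA),\quad\text{i.e.}\quad \tr(\bbW^\top\bbSigma_{\bbx}\bbA)=0.
\end{equation*}

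\emph{Variational inequality.} Since $\bbG\ge\bbzero$ and $\bbW_0\ge\bbzero$, we get $\langle\bbG,\bbW_0-\bbW\rangle\ge 0$. Equivalently, the one-sided derivative of $\varphi(\bbW):=\bar F(\bbW)+2h(\bbW)$ along the feasible segment $\bbW+t(\bbW_0-\bbW)$ is non-negative at $t=0$. Along this segment $\bbM(t)=\bbM+t(\bbI-\bbM)$, and $\varphi=\|\bbM\|_F^2-2\log\det\bbM$ up to constants. The inequality therefore reads
\begin{equation*}
\tr\bbM-\|\bbM\|_F^2-\tr(\bbM^{-1})+d\;\ge\;0 .
\end{equation*}

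\textbf{Step 4: conclude.} The goal is to combine the two relations of Step 3 with $\det\bbM=1$ to obtain $\bar F(\bbW)=\|\bbM\|_F^2\le d=\bar F(\bbW_0)$. Since $\bbW$ is a feasible DAG in $\ccalW_1$, \eqref{eq:population_identifiability} then forces $\bbW=\bbW_0$.

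\textbf{Main obstacle.} The hard step is Step 4, because the variational inequality only gives $\|\bbM\|_F^2\le d+\tr\bbM-\tr\bbM^{-1}$. I would first try to show that $\tr\bbM\le\tr\bbM^{-1}$, or more modestly $\tr\bbM\le d\le\tr\bbM^{-1}$, using the following facts:
\begin{itemize}
\item $\bbM$ and $\bbM^{-1}=\bbA^{-1}\bbA_0$ are products of a unipotent matrix with the inverse of another;
\item $\bbA^{-1}$ and $\bbA_0^{-1}$ are entrywise non-negative with unit diagonal;
\item the complementarity identity gives $\tr(\bbW^\top\bbSigma_{\bbx}\bbA)=0$.
\end{itemize}
If that fails, the fallback is to use the full entrywise KKT information rather than only these two scalar inner products. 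Concretely, I would read $\bbA^\top\bbG/2=\bbI-\bbM^\top\bbM$ row by row in a topological order of $\bbW$ and argue inductively that each column of $\bbW$ equals the corresponding least-squares regression coefficient over its parent set. This would show that $\bbW$ is the population SEM fit on its own support, and a support-comparison argument with $\bbW_0$ would then finish the proof.
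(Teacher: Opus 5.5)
\textbf{There is a genuine gap: Step 4 is never proved, and the routes you sketch for it do not close it.}

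Steps 1--3 are correct. That covers the factorization $\mathbf{G}=2\bbA^{-\top}(\bbI-\bbM^\top\bbM)$, the DAG identities, the complementarity identity $\|\bbM\|_F^2=\tr(\bbSigma_{\bbx}\bbA)$, and the directional-derivative inequality. Your endgame is also right: show $\bar F(\bbW)\le d=\bar F(\bbW_0)$, then invoke \eqref{eq:population_identifiability}.

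The problems are in Step 4:
\begin{itemize}
\item The "modest'' chain $\tr\bbM\le d\le\tr\bbM^{-1}$ is false in its second half. By the very non-negativity you list, $\tr(\bbM^{-1})=\tr(\bbA^{-1}\bbA_0)=d-\tr(\bbA^{-1}\bbW_0)\le d$, with strict inequality whenever $\tr(\bbA^{-1}\bbW_0)>0$.
\item The inequality $\tr\bbM\le\tr\bbM^{-1}$ fails for general pairs of non-negative DAGs. For example, take $\bbW$ the path $1\to2\to3$ and $\bbW_0$ the single edge $3\to1$. Then $\tr\bbM-\tr\bbM^{-1}=W_{12}W_{23}[W_0]_{31}>0$. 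So that inequality would have to come from KKT information you have not identified.
\end{itemize}
The root cause is your choice of test matrix. Since $\bbW_0$ has zero diagonal, $\langle\mathbf{G},\bbW_0-\bbW\rangle=\langle\mathbf{G},\bbW_0\rangle$ only sees off-diagonal entries of the gradient. Those entries are not what controls the score.

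\textbf{Missing idea: pair $\mathbf{G}\ge\bbzero$ with $\bbI$, i.e., use that the diagonal of the gradient is non-negative.} Since $\tr(\bbA^{-1})=d$, you get $\tr\mathbf{G}=2\bigl(d-\tr(\bbSigma_{\bbx}\bbA)\bigr)\ge 0$. Your own complementarity identity then gives
$\bar F(\bbW)=\|\bbM\|_F^2=\tr(\bbSigma_{\bbx}\bbA)\le d$,
and identifiability finishes.

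This is exactly the paper's argument. With its half-gradient $\bbA(\bbW)$, the paper combines $\langle\bbW,\bbA(\bbW)\rangle=0$ with $\tr(\bbA(\bbW))\ge 0$ to get $\bar F(\bbW)\le d\,(1+\tfrac{c}{2}h(\bbW))$. Geometrically, the useful feasible direction is $\bbI-\bbW$, not the segment toward $\bbW_0$. Along it $\bbM\mapsto(1-t)\bbM$, and the derivative at $t=0$ is $2(d-\|\bbM\|_F^2)$.

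Your fallback has the same blind spot. The normal-equation characterization is correct: it follows from complementarity on the support, because $[\bbA^{-1}]_{ji}=0$ whenever $W_{ij}>0$ in a DAG. But being the least-squares fit on its own support does not single out $\bbW_0$. For $d=2$ and $\bbW_0=a\,e_1e_2^\top$, the reversed regression $\frac{a}{1+a^2}e_2e_1^\top$ is also such a fit. In the gradient condition, what excludes fits like this is precisely the diagonal requirement $[\bbSigma_{\bbx}\bbA]_{jj}\le 1$, i.e., every residual variance is at most one.
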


See \cref{app:proof_acyclic_kkt_points} for the proof.
In summary, \cref{thm:population_unique_minimizer,thm:population_stationary_points,thm:acyclic_kkt_points} show that the non-negativity assumption, combined with the log-determinant acyclicity constraint, induces a favorable population landscape: $\bbW_0$ is the unique global minimizer of the augmented Lagrangian, there are no spurious interior stationary points, and every acyclic KKT point is the true DAG.
These results do not by themselves guarantee convergence of a first-order method to $\bbW_0$, but they indicate that the population augmented Lagrangian is more amenable to optimization than formulations based on Hadamard-product acyclicity constraints.
They also help explain the numerical behavior in \cref{sec:numerical_evaluation}, where the estimation error decreases as the number of samples grows.






\section{Numerical evaluation}\label{sec:numerical_evaluation}

We now provide a numerical evaluation of the proposed method in finite-sample regimes.
We first evaluate performance in a controlled synthetic setting, where the ground-truth DAG and the data-generating process are known.
We then assess the method on the Sachs protein-signaling benchmark, which provides a real-data test case with a widely used reference DAG.

We measure estimation accuracy through the normalized Frobenius error
\begin{equation}
    \operatorname{nerr}(\hbW,\bbW_0)
    :=
    \frac{\|\hbW-\bbW_0\|_F^2}{\|\bbW_0\|_F^2},
\end{equation}
and structural accuracy through the structural Hamming distance (SHD), normalized by the number of nodes.
The SHD counts the number of edge insertions, deletions, and reversals needed to match the support of the true DAG.
We compare the proposed log-determinant method, denoted as ``Logdet'', with three continuous DAG-learning baselines: NOTEARS~\cite{zheng2018dags}, DAGMA~\cite{bello2022dagma}, and CoLiDE~\cite{saboksayr2023colide}.
For completeness, we also include the ``Matexp'' variant from the conference experiments, which replaces the log-determinant acyclicity function with its matrix-exponential counterpart while retaining the non-negativity constraint.
The synthetic study is designed to isolate three complementary effects: the number of available samples, the size and topology of the underlying DAG, and the variance of the exogenous noise.
Unless otherwise stated, the data are generated from Erd\H{o}s--R\'enyi DAGs with $d=100$ nodes, average degree equal to $4$, and $n=1000$ samples from the linear SEM in \eqref{eq:sem} with standard Gaussian exogenous noise.
Each curve reports the median together with the 25th and 75th percentiles over 100 independent realizations.

\begin{figure*}[!t]
    \centering
    \begin{subfigure}[t]{0.32\textwidth}
        \centering
        \begin{tikzpicture}[baseline,scale=1]

\pgfplotstableread{data/samples_err_med.csv}\errtable
\pgfplotstableread{data/samples_err_prctile25.csv}\prcttop
\pgfplotstableread{data/samples_err_prctile75.csv}\prctbot

\pgfmathsetmacro{\opacity}{0.3}
\pgfmathsetmacro{\contourop}{0.25}

\begin{loglogaxis}[
    xlabel={(a) Number of samples $n$},
    xmin=50,
    xmax=5000,
    xtick = {50, 100, 500, 1000, 5000},
    xticklabels = {50, 100, 500, 1000, 5000},
    ylabel={$nerr(\hbW, \bbW_0)$},
    ymin = 9e-4,
    ymax = 1,
    grid style=densely dashed,
    grid=both,
    legend style={
        at={(0, 0)},
        anchor=south west},
    legend columns=1,
    width=\linewidth,
    height=160,
    ]

    \addplot [blue!80!white, name path = logdet-bot, opacity=\contourop, forget plot] table [x=xaxis, y= MM-Logdet] \prctbot;
    \addplot [blue!70!white, name path = logdet-top, opacity=\contourop, forget plot] table [x=xaxis, y= MM-Logdet] \prcttop;
    \addplot[blue!70!white, fill opacity=\opacity, forget plot] fill between[of=logdet-bot and logdet-top];
    \addplot[blue, solid, mark=o] table [x=xaxis, y= MM-Logdet] {\errtable};

    \addplot [orange!80!white, name path = matexp-bot, opacity=\contourop, forget plot] table [x=xaxis, y= MM-Matexp] \prctbot;
    \addplot [orange!70!white, name path = matexp-top, opacity=\contourop, forget plot] table [x=xaxis, y= MM-Matexp] \prcttop;
    \addplot[orange!70!white, fill opacity=\opacity, forget plot] fill between[of=matexp-bot and matexp-top];
    \addplot[orange, solid, mark=+] table [x=xaxis, y= MM-Matexp] {\errtable};

    \addplot [green!70!black, name path = colide-bot, opacity=\contourop, forget plot] table [x=xaxis, y= CoLiDe-Fix] \prctbot;
    \addplot [green!70!black, name path = colide-top, opacity=\contourop, forget plot] table [x=xaxis, y= CoLiDe-Fix] \prcttop;
    \addplot[green!70!black, fill opacity=\opacity, forget plot] fill between[of=colide-bot and colide-top];
    \addplot[green!65!black, solid, mark=x] table [x=xaxis, y= CoLiDe-Fix] {\errtable};
    
    \addplot [red!80!white, name path = dagma-bot, opacity=\contourop, forget plot] table [x=xaxis, y=DAGMA] \prctbot;
    \addplot [red!70!white, name path = dagma-top, opacity=\contourop, forget plot] table [x=xaxis, y=DAGMA] \prcttop;
    \addplot[red!70!white, fill opacity=\opacity, forget plot] fill between[of=dagma-bot and dagma-top];
    \addplot[red, solid, mark=star] table [x=xaxis, y=DAGMA] {\errtable};

    \addplot [gray!90!white, name path = notears-bot, opacity=\contourop, forget plot] table [x=xaxis, y=NoTears] \prctbot;
    \addplot [gray!80!white, name path = notears-top, opacity=\contourop, forget plot] table [x=xaxis, y=NoTears] \prcttop;
    \addplot[gray!80!white, fill opacity=\opacity, forget plot] fill between[of=notears-bot and notears-top];
    \addplot[gray!90!black, solid, mark=Mercedes star] table [x=xaxis, y=NoTears] {\errtable};
    
    \legend{Logdet,  Matexp, CoLiDE, DAGMA, NoTears}
\end{loglogaxis}
\end{tikzpicture}
    \end{subfigure}
    \begin{subfigure}[t]{0.32\textwidth}
        \centering
        \begin{tikzpicture}[baseline,scale=1]

\pgfplotstableread{data/size_shd_mean.csv}\errtable
\pgfplotstableread{data/size_shd_std_up.csv}\prcttop
\pgfplotstableread{data/size_shd_std_down.csv}\prctbot

\pgfmathsetmacro{\opacity}{0.3}
\pgfmathsetmacro{\contourop}{0.25}

\begin{semilogxaxis}[
    xlabel={(b) Number of nodes $d$},
    xmin=50,
    xmax=500,
    xtick = {50, 100, 250, 500},
    xticklabels = {50, 100, 250, 500},
    ylabel={Normalized SHD},
    ymin = -0.01,
    ymax = 1.8,
    grid style=densely dashed,
    grid=both,
    legend style={
        at={(0, 1)},
        anchor=north west},
    legend columns=2,
    width=\linewidth,
    height=160,
    ]

    \addplot [blue, name path = logdet-er-bot, opacity=\contourop, forget plot] table [x=xaxis, y=  MM-Logdet-ER] \prctbot;
    \addplot [blue!90!white, name path = logdet-er-top, opacity=\contourop, forget plot] table [x=xaxis, y=  MM-Logdet-ER] \prcttop;
    \addplot[blue!90!white, fill opacity=\opacity, forget plot] fill between[of=logdet-er-bot and logdet-er-top];
    \addplot[blue!80!black, solid, mark=o] table [x=xaxis, y=  MM-Logdet-ER] {\errtable};

    \addplot [blue!40!white, name path = logdet-sf-bot, opacity=\contourop, forget plot] table [x=xaxis, y=  MM-Logdet-SF] \prctbot;
    \addplot [blue!30!white, name path = logdet-sf-top, opacity=\contourop, forget plot] table [x=xaxis, y=  MM-Logdet-SF] \prcttop;
    \addplot[blue!30!white, fill opacity=\opacity, forget plot] fill between[of=logdet-sf-bot and logdet-sf-top];
    \addplot[blue!60!white, densely dotted, mark=o] table [x=xaxis, y=  MM-Logdet-SF] {\errtable};

    \addplot [red, name path = dagma-er-bot, opacity=\contourop, forget plot] table [x=xaxis, y=DAGMA-ER] \prctbot;
    \addplot [red!90!white, name path = dagma-er-top, opacity=\contourop, forget plot] table [x=xaxis, y=DAGMA-ER] \prcttop;
    \addplot[red!90!white, fill opacity=\opacity, forget plot] fill between[of=dagma-er-bot and dagma-er-top];
    \addplot[red!80!black, solid, mark=star] table [x=xaxis, y=DAGMA-ER] {\errtable};

    \addplot [red!40!white, name path = dagma-sf-bot, opacity=\contourop, forget plot] table [x=xaxis, y=DAGMA-SF] \prctbot;
    \addplot [red!30!white, name path = dagma-sf-top, opacity=\contourop, forget plot] table [x=xaxis, y=DAGMA-SF] \prcttop;
    \addplot[red!30!white, fill opacity=\opacity, forget plot] fill between[of=dagma-sf-bot and dagma-sf-top];
    \addplot[red!60!white, densely dotted, mark=star] table [x=xaxis, y=DAGMA-SF] {\errtable};

    \legend{Logdet-ER, Logder-SF, DAGMA-ER, DAGMA-SF}
\end{semilogxaxis}
\end{tikzpicture}
    \end{subfigure}
    \begin{subfigure}[t]{0.32\textwidth}
        \centering
        \begin{tikzpicture}[baseline,scale=1]

\pgfplotstableread{data/vars_err_med.csv}\errtable
\pgfplotstableread{data/vars_err_prctile25.csv}\prcttop
\pgfplotstableread{data/vars_err_prctile75.csv}\prctbot

\pgfmathsetmacro{\opacity}{0.3}
\pgfmathsetmacro{\contourop}{0.25}

\begin{semilogyaxis}[
    xlabel={(c) Exogenous noise variance $\sigma^2$},
    xmin=1,
    xmax=10,
    ylabel={$nerr(\hbW, \bbW_0)$},
    ymin = 4e-3,
    ymax = .23,
    grid style=densely dashed,
    grid=both,
    legend style={
        at={(0, .5)},
        anchor=west},
    legend columns=2,
    width=\linewidth,
    height=160,
    ]

    \addplot [blue, name path = logdet-bot, opacity=\contourop, forget plot] table [x=xaxis, y=   MM-Logdet] \prctbot;
    \addplot [blue!90!white, name path = logdet-top, opacity=\contourop, forget plot] table [x=xaxis, y=   MM-Logdet] \prcttop;
    \addplot[blue!90!white, fill opacity=\opacity, forget plot] fill between[of=logdet-bot and logdet-top];
    \addplot[blue!80!black, solid, mark=o] table [x=xaxis, y=   MM-Logdet] {\errtable};

    \addplot [blue!40!white, name path = logdet-sigma-bot, opacity=\contourop, forget plot] table [x=xaxis, y=   MM-Logdet-Sigma] \prctbot;
    \addplot [blue!30!white, name path = logdet-sigma-top, opacity=\contourop, forget plot] table [x=xaxis, y=   MM-Logdet-Sigma] \prcttop;
    \addplot[blue!30!white, fill opacity=\opacity, forget plot] fill between[of=logdet-sigma-bot and logdet-sigma-top];
    \addplot[blue!60!white, solid, mark=+] table [x=xaxis, y=   MM-Logdet-Sigma] {\errtable};

    \addplot [green!70!black, name path = colide-bot, opacity=\contourop, forget plot] table [x=xaxis, y= CoLiDe-Fix] \prctbot;
    \addplot [green!70!black, name path = colide-top, opacity=\contourop, forget plot] table [x=xaxis, y= CoLiDe-Fix] \prcttop;
    \addplot[green!70!black, fill opacity=\opacity, forget plot] fill between[of=colide-bot and colide-top];
    \addplot[green!65!black, solid, mark=x] table [x=xaxis, y= CoLiDe-Fix] {\errtable};
    
    \addplot [red!80!white, name path = dagma-bot, opacity=\contourop, forget plot] table [x=xaxis, y=DAGMA] \prctbot;
    \addplot [red!70!white, name path = dagma-top, opacity=\contourop, forget plot] table [x=xaxis, y=DAGMA] \prcttop;
    \addplot[red!70!white, fill opacity=\opacity, forget plot] fill between[of=dagma-bot and dagma-top];
    \addplot[red, solid, mark=star] table [x=xaxis, y=DAGMA] {\errtable};
    
    \legend{Logdet, Logdet-$\sigma$, CoLiDe, DAGMA}
\end{semilogyaxis}
\end{tikzpicture}
    \end{subfigure}
    \caption{Synthetic evaluation of the proposed DAG-learning method. 
    (a) Normalized Frobenius error as the number of samples increases.
    (b) Normalized SHD as the graph size increases, for Erd\H{o}s--R\'enyi (ER) and scale-free (SF) DAGs.
    (c) Normalized Frobenius error as the variance of the exogenous noise increases.}
    \label{fig:numerical_evaluation}
\end{figure*}

\vspace{3mm}\noindent
\textbf{Test case 1 - Sample complexity.}
\Cref{fig:numerical_evaluation}(a) shows the normalized Frobenius error as the number of samples increases.
The proposed Logdet method consistently attains the lowest estimation error among the compared methods.
More importantly, its error keeps decreasing as $n$ grows, in contrast with the saturation observed for the non-convex baselines.
This trend is consistent with the population result in \cref{thm:population_unique_minimizer}: although the theorem does not itself provide a finite-sample rate, the experiment suggests that the finite-sample estimator approaches the correct population solution as more observations become available.
The comparison with Matexp also indicates that the log-determinant barrier is numerically preferable to the matrix-exponential alternative in this setting.

\vspace{3mm}\noindent
\textbf{Test case 2 - Graph size and topology.}
\Cref{fig:numerical_evaluation}(b) evaluates structural recovery as the number of nodes increases.
The comparison focuses on Logdet and DAGMA, since both use a log-determinant acyclicity principle but lead to different optimization problems.
For ER graphs, Logdet recovers the correct support with essentially zero normalized SHD across the considered graph sizes.
For SF graphs, the task becomes more challenging for both methods, but the degradation is substantially milder for Logdet.
This supports the practical value of exploiting non-negativity to obtain a better-conditioned log-determinant formulation, rather than relying on Hadamard-product acyclicity constraints.

\vspace{3mm}\noindent
\textbf{Test case 3 - Noise variance.}
\Cref{fig:numerical_evaluation}(c) studies the effect of increasing the exogenous noise variance in the homoscedastic setting $\bbSigma_{\bbz}=\sigma^2\bbI$.
As expected, the estimation error of methods that do not explicitly account for the noise level increases with $\sigma^2$.
The variant ``Logdet-$\sigma$'', which uses the known noise covariance, remains stable across the tested range and performs similarly to CoLiDE.
This behavior is aligned with the discussion in the population analysis: what matters for identifiability is not the normalization $\bbSigma_{\bbz}=\bbI$ itself, but having a noise model that makes the true DAG identifiable.
When the covariance structure is known, it can be incorporated directly into the score function.

\subsection{Sachs protein-signaling benchmark}\label{sec:sachs_experiment}

We next evaluate the proposed method on the Sachs protein-signaling dataset~\cite{sachs2005causal}, a standard real-data benchmark for DAG structure learning and causal discovery.
The dataset contains single-cell measurements of 11 phosphorylated proteins and phospholipids under experimental perturbations, together with a reference signaling network validated by biological knowledge.
The benchmark DAG contains 11 nodes and 17 directed edges, and the observational subset used here contains 853 samples.
Because it combines real biological measurements with a widely used reference DAG, this dataset is the preferred real-data benchmark for comparing DAG-learning methods in this work.

For this experiment, we refer to the proposed method as NOMAD, short for Non-negative Optimization via Multipliers for Acyclic Digraphs.
\Cref{tab:sachs_results} reports structural and estimation metrics for NOMAD and two continuous DAG-learning baselines, DAGMA~\cite{bello2022dagma} and CoLiDE~\cite{saboksayr2023colide}.
NOMAD attains the lowest SHD, FDR, and estimation error, while matching CoLiDE-Fix in TPR.
Although CoLiDE-Fix is faster in this small-scale setting, its recovered graph contains a substantially larger fraction of false discoveries.
The graph visualizations in \cref{fig:sachs_graphs} illustrate this behavior: NOMAD recovers a sparse estimate aligned with the reference network and, according to \cref{tab:sachs_results}, achieves an FDR of zero.

\begin{table}[!t]
    \centering
    \caption{Results on the Sachs protein-signaling benchmark. Lower values are better for SHD, FDR, Err., and time; higher values are better for TPR and F1.}
    \label{tab:sachs_results}
    \footnotesize
    \setlength{\tabcolsep}{2.8pt}
    \begin{tabular}{lcccccc}
        \hline
        Method & SHD & TPR & FDR & F1 & Err. & Time (s) \\
        \hline
        NOMAD (ours) & \textbf{10} & \textbf{0.412} & \textbf{0.000} & \textbf{0.583} & \textbf{0.898} & 34.12 \\
        CoLiDE-Fix & 13 & \textbf{0.412} & 0.588 & 0.412 & 1.984 & 5.62 \\
        DAGMA & 15 & 0.118 & 0.600 & 0.182 & 1.439 & \textbf{1.35} \\
        \hline
    \end{tabular}
\end{table}

\begin{figure*}[!t]
    \centering
    \begin{subfigure}[t]{0.32\textwidth}
        \centering
        \includegraphics[width=\linewidth]{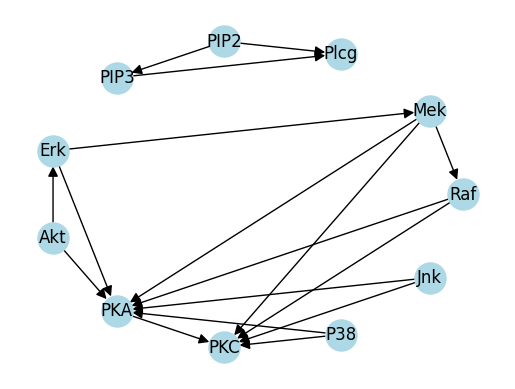}
        \caption{Reference DAG}
    \end{subfigure}
    \begin{subfigure}[t]{0.32\textwidth}
        \centering
        \includegraphics[width=\linewidth]{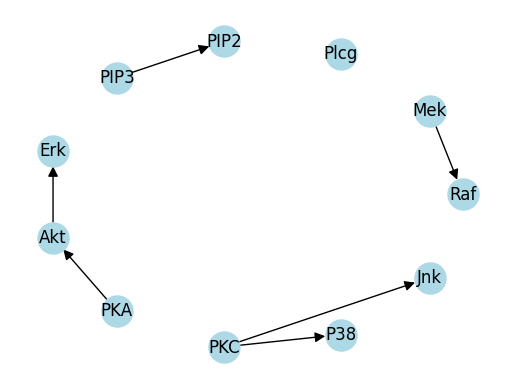}
        \caption{DAGMA estimate}
    \end{subfigure}
    \begin{subfigure}[t]{0.32\textwidth}
        \centering
        \includegraphics[width=\linewidth]{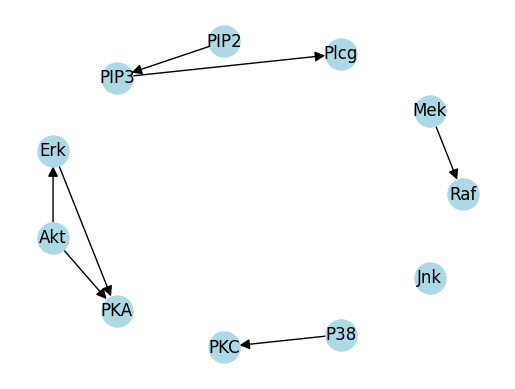}
        \caption{NOMAD estimate}
    \end{subfigure}
    \caption{Qualitative comparison on the Sachs protein-signaling benchmark.
    The proposed NOMAD method recovers a graph that is structurally closer to the reference DAG than DAGMA and introduces no false discoveries, in agreement with the SHD, F1, and FDR scores reported in \cref{tab:sachs_results}.}
    \label{fig:sachs_graphs}
\end{figure*}


\section{Conclusions}\label{sec:conclusions}

This paper studied DAG structure learning under a non-negativity assumption on the edge weights.
This additional structure provides more than a modeling prior: by ruling out sign cancellations, it enables a simple smooth characterization of acyclicity applied directly to the weighted adjacency matrix, leading to a constraint whose gradient remains informative at feasible DAGs.
Building on this observation, we formulated a regularized non-negative DAG-learning problem and proposed NOMAD, an augmented-Lagrangian algorithm based on the method of multipliers.
We further showed that the resulting formulation has a favorable population optimization landscape: under identifiability, the true DAG is the unique global minimizer, there are no spurious interior stationary points, and the true DAG is the only acyclic KKT point.
Experiments on controlled synthetic data and on the Sachs protein-signaling benchmark support these theoretical findings, showing that exploiting non-negativity can improve both weighted estimation and structural recovery relative to established continuous DAG-learning baselines.
Overall, the results suggest that incorporating meaningful sign information can turn DAG learning into a substantially better-conditioned optimization problem, providing a promising route for robust structure learning in applications where interactions are known to be non-inhibitory.

\appendices
\section{Proof of \cref{thm:population_unique_minimizer}}\label{app:proof_population_unique_minimizer}
We first state a technical lower bound that reduces the analysis of the augmented Lagrangian to a scalar function of the acyclicity value.
Moreover, the following definitions will be used in the different proofs.
\begin{equation}\label{eq:proof_defs}
    \bbM:=\bbI-\bbW,\qquad
    \bbM_0:=\bbI-\bbW_0,\qquad
    \bbC:=\bbM_0^{-1}\bbM.
\end{equation}

\begin{lemma}\label{lem:population_lower_bound}
Assume that the data follow the SEM in \eqref{eq:sem}, where $\bbW_0\in\ccalW_1$ is the weighted adjacency matrix of the true DAG and the exogenous noise has identity covariance.
Let $c>0$ and $\lambda\geq 2$.
Then, for every $\bbW\in\ccalW_1$,
\begin{equation}
    \bar{L}_c(\bbW,\lambda)
    \geq
    \phi_{\lambda,c}(h(\bbW)),
\end{equation}
where
\begin{equation}
    \phi_{\lambda,c}(t)
    :=
    d e^{-2t/d}
    +
    \lambda t
    +
    \frac{c}{2}t^2,
    \qquad t\geq 0.
\end{equation}
Moreover, $\phi_{\lambda,c}(t)\geq d$ for every $t\geq 0$, with equality if and only if $t=0$.
\end{lemma}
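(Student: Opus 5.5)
The plan is to bound the population score $\bar F$ from below by a function of $\det(\bbI-\bbW)$ alone, and then note that $\det(\bbI-\bbW)=e^{-h(\bbW)}$ when $s=1$. The remaining terms of $\bar L_c$ are already functions of $h(\bbW)$, so this gives the bound.

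First I rewrite $\bar F$ using the SEM. With $\mbE[\bbz\bbz^\top]=\bbI$, the model \eqref{eq:sem} gives $\bbx=\bbM_0^{-\top}\bbz$, so $\bbSigma_{\bbx}=\bbM_0^{-\top}\bbM_0^{-1}$. Here $\bbM_0=\bbI-\bbW_0$ is invertible because $\rho(\bbW_0)<1$. Substituting into \eqref{eq:population_score} with $\bbM=\bbI-\bbW$ and $\bbC=\bbM_0^{-1}\bbM$ from \eqref{eq:proof_defs} gives
\begin{equation*}
\bar F(\bbW)=\tr\left(\bbM^\top\bbM_0^{-\top}\bbM_0^{-1}\bbM\right)=\|\bbC\|_F^2=\sum_{i=1}^d\sigma_i(\bbC)^2 .
\end{equation*}

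Next I apply the AM--GM inequality to the squared singular values:
\begin{equation*}
\|\bbC\|_F^2\geq d\left(\prod_{i}\sigma_i(\bbC)^2\right)^{1/d}=d\,|\det\bbC|^{2/d}.
\end{equation*}
Since $\bbW_0$ is a DAG, it is nilpotent up to a permutation, so $\det\bbM_0=1$. Hence $\det\bbC=\det(\bbI-\bbW)$. For $\bbW\in\ccalW_1$ the eigenvalues of $\bbI-\bbW$ are $1-\mu$ with $|\mu|<1$. They therefore have positive real parts, and non-real ones come in conjugate pairs, so $\det(\bbI-\bbW)>0$. This is exactly what makes $h(\bbW)=-\log\det(\bbI-\bbW)$ real, and it gives $\det\bbC=e^{-h(\bbW)}$. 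Combining, $\bar F(\bbW)\geq d\,e^{-2h(\bbW)/d}$. Adding $\lambda h(\bbW)+\frac{c}{2}h(\bbW)^2$ to both sides yields $\bar L_c(\bbW,\lambda)\geq\phi_{\lambda,c}(h(\bbW))$. The condition $t=h(\bbW)\geq 0$ holds by \cref{prop:nonneg_logdet}.

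For the scalar claim, I differentiate:
\begin{equation*}
\phi_{\lambda,c}'(t)=-2e^{-2t/d}+\lambda+ct\geq -2+2+ct=ct>0 \quad\text{for } t>0,
\end{equation*}
using $e^{-2t/d}\leq 1$ and $\lambda\geq 2$. So $\phi_{\lambda,c}$ is strictly increasing on $[0,\infty)$. Since $\phi_{\lambda,c}(0)=d$, it follows that $\phi_{\lambda,c}(t)\geq d$, with equality only at $t=0$.

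The step I expect to need the most care is the determinant identity $\det\bbC=e^{-h(\bbW)}$. It requires both $\det\bbM_0=1$, which comes from acyclicity of $\bbW_0$, and positivity of $\det(\bbI-\bbW)$ on $\ccalW_1$. The AM--GM step and the monotonicity argument are routine once that identity is in place.
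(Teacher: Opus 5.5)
Your proposal is correct and follows the paper's proof essentially step for step. Both rewrite $\bar F(\bbW)$ as $\|\bbC\|_F^2$, use $\det\bbM_0=1$ to get $\det\bbC=e^{-h(\bbW)}$, apply AM--GM to the squared singular values, and then analyze $\phi_{\lambda,c}$. The only minor differences are that you justify $\det(\bbI-\bbW)>0$ explicitly via the eigenvalues, which the paper merely asserts, and that you get monotonicity from $\phi'_{\lambda,c}(t)\geq ct$ rather than from strict convexity together with $\phi'_{\lambda,c}(0)=\lambda-2\geq 0$.
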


\begin{proof}
The proof proceeds in two steps.
First, we relate the population score to the acyclicity value by rewriting the score via the matrix $\bbC$, which enables relating its determinant to $h(\bbW)$.
Second, after substituting this bound into the augmented Lagrangian, it remains only to analyze a one-dimensional function of $h(\bbW)$.

We begin by rewriting the population score.
Since the exogenous noise has identity covariance, the SEM implies that the covariance matrix of the data is given by
\begin{equation}
    \bbSigma_{\bbx} =
    (\bbI-\bbW_0)^{-\top} (\bbI-\bbW_0)^{-1} =
    \bbM_0^{-\top}\bbM_0^{-1}.
\end{equation}
Therefore, the score function can be expressed as
\begin{equation}\label{eq:normalized_population_score}
    \bar{F}(\bbW) =
    \tr(\bbM^\top\bbSigma_{\bbx}\bbM) =
    \tr(\bbC^\top\bbC).
\end{equation}

Moving our attention to the acyclicity function, first notice that $\bbW_0$ is a DAG.
It follows that $\bbW_0$ is a nilpotent matrix, and hence $\det(\bbM_0)=\det(\bbI-\bbW_0)=1$.
Thus,
\begin{equation}
    \det(\bbC)
    =
    \det(\bbM_0^{-1}\bbM)
    =
    \det(\bbM).
\end{equation}
Since $\bbW\in\ccalW_1$, the log-determinant is well defined and
\begin{align}\label{eq:normalized_acyclicity}
    h(\bbW) &= -\log\det(\bbI-\bbW) = -\log\det(\bbM) \nonumber \\
    &= -\log\det(\bbC).
\end{align}
Letting $\sigma_1,\ldots,\sigma_d$ denote the singular values of $\bbC$, and noticing that $\det(\bbC)>0$ since $\det(\bbM)>0$, we have
\begin{equation}\label{eq:det_to_exp}
    \prod_{i=1}^d \sigma_i = \det(\bbC) = e^{-h(\bbW)}.
\end{equation}

Next, to relate the score and the acyclicity function, notice that $\bar{F}(\bbW) = \sum_{i=1}^d \sigma_i^2$.
Applying the arithmetic-geometric mean inequality to the non-negative numbers $\sigma_1^2,\ldots,\sigma_d^2$ yields
\begin{equation}\label{eq:am_gm_inequality}
    \bar{F}(\bbW)
    =
    \sum_{i=1}^d \sigma_i^2
    \geq
    d
    \left(
    \prod_{i=1}^d \sigma_i^2
    \right)^{1/d}
    =
    d e^{-2h(\bbW)/d}.
\end{equation}
Substituting this bound into the definition of $\bar{L}_c$ and using the function $\phi_{\lambda,c}$ yields
\begin{align}
    \bar{L}_c(\bbW,\lambda)
    & = \bar{F}(\bbW) + \lambda h(\bbW) + \frac{c}{2}h(\bbW)^2 \nonumber \\
    &\geq
    d e^{-2h(\bbW)/d}
    +
    \lambda h(\bbW)
    +
    \frac{c}{2}h(\bbW)^2 \nonumber\\
    &=
    \phi_{\lambda,c}(h(\bbW)),
\end{align}
which reduces the augmented-Lagrangian bound to the scalar function $\phi_{\lambda,c}$ evaluated at the acyclicity value.

It remains to show that this scalar lower bound is never below its acyclic value.
For $t\geq 0$, the first and second order derivatives are
\begin{align}
    &\phi'_{\lambda,c}(t) =
    -2e^{-2t/d} + \lambda + ct, \label{eq:derivative_phi} \\
    &\phi''_{\lambda,c}(t) = \frac{4}{d}e^{-2t/d} + c > 0.
\end{align}
Thus $\phi_{\lambda,c}$ is strictly convex and $\phi'_{\lambda,c}$ is strictly increasing.
Since $\lambda\geq 2$, we have $\phi'_{\lambda,c}(0)=\lambda-2\geq 0$.
Hence $\phi_{\lambda,c}$ is minimized at $t=0$ over $[0,\infty)$, and strict convexity gives
\begin{equation}
    \phi_{\lambda,c}(t)
    \geq
    \phi_{\lambda,c}(0)
    =
    d,
\end{equation}
with equality if and only if $t=0$.
This concludes the proof.
\end{proof}

We now leverage the result from \cref{lem:population_lower_bound} to first prove that $\bbW_0$ is a global minimizer of $\bar{L}_c(\cdot,\lambda)$ over $\ccalW_1$, and then prove that it is the unique global minimizer.
Since $\bbW_0$ is a DAG, $h(\bbW_0)=0$.
Moreover, from $(\bbI-\bbW_0^\top)\bbx=\bbz$ and $\mbE[\bbz\bbz^\top]=\bbI$, it follows that
\begin{equation}
    \bar{L}_c(\bbW_0,\lambda) =
    \bar{F}(\bbW_0) =
    \mbE[\|\bbz\|_2^2] =
    d.
\end{equation}
For any $\bbW\in\ccalW_1$ and $\lambda \geq 2$, \cref{lem:population_lower_bound} gives
\begin{equation}
    \bar{L}_c(\bbW,\lambda)
    \geq
    \phi_{\lambda,c}(h(\bbW))
    \geq
    d
    =
    \bar{L}_c(\bbW_0,\lambda).
\end{equation}
Therefore, $\bbW_0$ is a global minimizer of $\bar{L}_c(\cdot,\lambda)$ over $\ccalW_1$.

To prove uniqueness, let $\bbW\in\ccalW_1$ be any global minimizer.
Then $\bar{L}_c(\bbW,\lambda)=d$.
The inequalities above can be tight only if $h(\bbW)=0$.
By \cref{prop:nonneg_logdet}, this means that $\bbW$ is a DAG.
Furthermore, every feasible DAG $\widetilde{\bbW}\in\ccalW_1$ satisfies $h(\widetilde{\bbW})=0$, so
\begin{equation}
    \bar{L}_c(\widetilde{\bbW},\lambda)
    =
    \bar{F}(\widetilde{\bbW}).
\end{equation}
Since $\bbW$ is a global minimizer of $\bar{L}_c$, it follows that $\bbW$ also minimizes $\bar{F}$ over the feasible DAG set.
The identifiability condition in \eqref{eq:population_identifiability} then implies $\bbW=\bbW_0$.
This proves that $\bbW_0$ is the unique global minimizer.

\section{Proof of \cref{thm:population_stationary_points}}\label{app:proof_population_stationary_points}
\begin{proof}
The proof uses the scalar function introduced in \cref{lem:population_lower_bound} to analyze the first-order equation $\nabla_{\bbW}\bar{L}_c(\bbW,\lambda)=\bbzero$.    
The main idea is to express the augmented Lagrangian in terms of the normalized matrix $\bbC=\bbM_0^{-1}\bbM$ [see \eqref{eq:proof_defs}].
In these coordinates, stationarity forces all singular values of $\bbC$ to be equal.
Combining this with the determinant identity encoded by $h(\bbW)$ reduces the matrix stationarity condition to a scalar equation involving $\phi'_{\lambda,c}(h(\bbW))$.

As in the proof of \cref{lem:population_lower_bound} [see \eqref{eq:normalized_population_score} and \eqref{eq:normalized_acyclicity}], leveraging the definitions in \eqref{eq:proof_defs} we can rewrite the score and the acyclicity function as
\begin{equation}
    \bar{F}(\bbW)=\tr(\bbC^\top\bbC),
    \qquad
    h(\bbW)=-\log\det(\bbC).
\end{equation}
Thus, with a slight abuse of notation, the population augmented Lagrangian can be written as a function of the matrix $\bbC$, leading to
\begin{equation}
    \bar{L}_c(\bbC,\lambda) \! = \!
    \tr(\bbC^\top\bbC) \!-\! \lambda \! \log\det(\bbC) \!+\! \frac{c}{2} \! \bigl(\log\det(\bbC)\bigr)^2.
\end{equation}

The affine map $\bbW\mapsto\bbC=\bbM_0^{-1}(\bbI-\bbW)$ is invertible.
Consequently, $\nabla_{\bbW}\bar{L}_c(\bbW,\lambda)=\bbzero$ if and only if $\nabla_{\bbC}\bar{L}_c(\bbC,\lambda)=\bbzero$.
Differentiating with respect to $\bbC$ gives
\begin{align}
    \nabla_{\bbC}\bar{L}_c(\bbC,\lambda)
    &=
    2\bbC
    -
    \lambda\bbC^{-\top}
    +
    c\log\det(\bbC)\bbC^{-\top}
    \nonumber\\
    &=
    2\bbC
    -
    \bigl(\lambda+c h(\bbW)\bigr)\bbC^{-\top},
\end{align}
where the last equality uses the equivalence
\begin{equation}\label{eq:acyclicity_to_C}
    h(\bbW)= -\log\det(\bbI-\bbW) = -\log\det(\bbC).
\end{equation}
Hence the stationarity condition $\nabla_{\bbC}\bar{L}_c(\bbC,\lambda) = \bbzero$ amounts to
\begin{equation}\label{eq:stationary_C_equation}
    2\bbC = \bigl(\lambda+c h(\bbW)\bigr)\bbC^{-\top}.
\end{equation}

We now turn this matrix equation into a scalar condition.
Multiplying both sides of \eqref{eq:stationary_C_equation} on the right by $\bbC^\top$ yields
\begin{equation}\label{eq:stationary_C_scalar}
    2\bbC\bbC^\top
    =
    \bigl(\lambda+c h(\bbW)\bigr)\bbI,
\end{equation}
which implies that all singular values of $\bbC$ are equal.
Letting this common singular value be $\sigma$, and using the relation in \eqref{eq:acyclicity_to_C} yields
\begin{equation}
    \det(\bbC)=e^{-h(\bbW)}>0,
\end{equation}
which in turn results in
\begin{equation}
    \sigma
    =
    e^{-h(\bbW)/d}.
\end{equation}
Since all singular values of $\bbC$ are equal to $\sigma$, the left-hand side of \eqref{eq:stationary_C_scalar} is $2\bbC\bbC^\top=2\sigma^2\bbI$.
Thus, comparing the scalar multiplying $\bbI$ on both sides of \eqref{eq:stationary_C_scalar} gives
\begin{equation}
    2e^{-2h(\bbW)/d}
    =
    \lambda+c h(\bbW).
\end{equation}
Equivalently,
\begin{equation}\label{eq:scalar_stationarity_condition}
    \phi'_{\lambda,c}\bigl(h(\bbW)\bigr)=0,
\end{equation}
where $\phi_{\lambda,c}$ is the scalar function in \cref{lem:population_lower_bound}, whose derivative is given in \eqref{eq:derivative_phi}.

It remains to analyze this scalar equation.
Since $\bbW\in\ccalW_1$, \cref{prop:nonneg_logdet} gives $h(\bbW)\geq 0$.
For every $t\geq 0$,
\begin{equation}
    \phi''_{\lambda,c}(t)
    =
    \frac{4}{d}e^{-2t/d}
    +
    c
    >
    0,
\end{equation}
so $\phi'_{\lambda,c}$ is strictly increasing on $[0,\infty)$.
If $\lambda>2$, then $\phi'_{\lambda,c}(0)=\lambda-2>0$.
Therefore $\phi'_{\lambda,c}(t)>0$ for every $t\geq 0$, and \eqref{eq:scalar_stationarity_condition} has no solution.
This proves that no classical stationary point exists in $\ccalW_1$ when $\lambda>2$.

Consider now the threshold case $\lambda=2$.
Then $\phi'_{2,c}(0)=0$, and strict monotonicity implies that the only solution of \eqref{eq:scalar_stationarity_condition} on $[0,\infty)$ is $h(\bbW)=0$.
Substituting $h(\bbW)=0$ into \eqref{eq:stationary_C_scalar} gives
\begin{equation}
    \bbC\bbC^\top=\bbI.
\end{equation}
Therefore,
\begin{equation}
    \bar{F}(\bbW)
    =
    \tr(\bbC^\top\bbC)
    =
    d.
\end{equation}
Since $h(\bbW)=0$, \cref{prop:nonneg_logdet} implies that $\bbW$ is a DAG.
Thus $\bbW$ is feasible for the identifiable population DAG problem in \eqref{eq:population_identifiability}.
Moreover, $\bar{F}(\bbW)=d=\bar{F}(\bbW_0)$, where the last equality follows from the SEM relation at the true DAG and identity noise covariance.
By the identifiability condition in \eqref{eq:population_identifiability}, we conclude that $\bbW=\bbW_0$.

Conversely, at $\bbW=\bbW_0$ we have $h(\bbW_0)=0$.
Using
\begin{equation}
    \nabla_{\bbW}\bar{F}(\bbW_0)
    =
    -2\bbM_0^{-\top},
    \qquad
    \nabla h(\bbW_0)
    =
    \bbM_0^{-\top},
\end{equation}
we obtain
\begin{equation}
    \nabla_{\bbW}\bar{L}_c(\bbW_0,2)
    =
    \nabla_{\bbW}\bar{F}(\bbW_0)
    +
    2\nabla h(\bbW_0)
    =
    \bbzero.
\end{equation}
This proves that, when $\lambda=2$, the only classical stationary point is $\bbW_0$.
\end{proof}

\section{Proof of \cref{thm:acyclic_kkt_points}}\label{app:proof_acyclic_kkt_points}
\begin{proof}
The proof has two steps.
First, we use the KKT conditions to derive a necessary inequality relating the population score to the acyclicity value.
Second, when the KKT point is acyclic, this inequality forces the score to match the value attained by the true DAG, and identifiability then yields $\bbW=\bbW_0$.

Let $\bbW\in\ccalW_1$ be a KKT point of $\bar{L}_c(\cdot,2)$ over $\ccalW_1$, and let $\bbM=\bbI-\bbW$.
Define
\begin{equation}
    \bbA(\bbW)
    :=
    \frac{1}{2}\nabla_{\bbW}\bar{L}_c(\bbW,2).
\end{equation}
Using
\begin{equation}
    \nabla_{\bbW}\bar{F}(\bbW)
    =
    -2\bbSigma_{\bbx}\bbM,
    \qquad
    \nabla h(\bbW)
    =
    \bbM^{-\top},
\end{equation}
we can write
\begin{equation}\label{eq:kkt_augmented_A_def}
    \bbA(\bbW)
    =
    -\bbSigma_{\bbx}\bbM
    +
    \left(1+\frac{c}{2}h(\bbW)\right)\bbM^{-\top}.
\end{equation}
Since $\bbW$ is KKT, it must satisfy the complementarity conditions
\begin{equation}\label{eq:kkt_augmented_A_conditions}
    \bbA(\bbW)\geq \bbzero,
    \qquad
    \bbW\circ\bbA(\bbW)=\bbzero.
\end{equation}

We next convert complementarity into a scalar inequality.
From $\bbW=\bbI-\bbM$ and \eqref{eq:kkt_augmented_A_conditions},
\begin{equation}
    0
    =
    \langle \bbW,\bbA(\bbW)\rangle
    =
    \tr(\bbA(\bbW))
    -
    \langle \bbM,\bbA(\bbW)\rangle.
\end{equation}
Hence,
\begin{equation}\label{eq:kkt_augmented_trace_identity}
    \tr(\bbA(\bbW))
    =
    \langle \bbM,\bbA(\bbW)\rangle.
\end{equation}
Substituting \eqref{eq:kkt_augmented_A_def} into the right-hand side gives
\begin{align}
    \langle \bbM,\bbA(\bbW)\rangle
    &=
    -\langle \bbM,\bbSigma_{\bbx}\bbM\rangle
    +
    \left(1+\frac{c}{2}h(\bbW)\right)
    \langle \bbM,\bbM^{-\top}\rangle
    \nonumber\\
    &=
    -\bar{F}(\bbW)
    +
    d\left(1+\frac{c}{2}h(\bbW)\right).
\end{align}
Since $\bbA(\bbW)\geq\bbzero$, its trace is non-negative, and \eqref{eq:kkt_augmented_trace_identity} yields
\begin{equation}\label{eq:kkt_augmented_lower_score_gap}
    \bar{F}(\bbW)
    \leq
    d\left(1+\frac{c}{2}h(\bbW)\right).
\end{equation}

We now specialize to acyclic KKT points.
If $h(\bbW)=0$, then \eqref{eq:kkt_augmented_lower_score_gap} gives $\bar{F}(\bbW)\leq d$.
On the other hand, \cref{lem:population_lower_bound} with $\lambda=2$ gives
\begin{equation}
    \bar{L}_c(\bbW,2)
    =
    \bar{F}(\bbW)
    \geq
    d,
\end{equation}
where we used $h(\bbW)=0$ in the equality.
Therefore $\bar{F}(\bbW)=d$.
Since $h(\bbW)=0$, \cref{prop:nonneg_logdet} implies that $\bbW$ is a DAG.
Moreover, the true DAG satisfies $\bar{F}(\bbW_0)=d$ by the SEM relation and identity noise covariance.
Thus $\bbW$ is a feasible DAG attaining the same population score as $\bbW_0$.
The identifiability condition in \eqref{eq:population_identifiability} then implies $\bbW=\bbW_0$.
\end{proof}

\bibliographystyle{IEEEtran}
\bibliography{myIEEEabrv,biblio}

\end{document}